\newcommand{\okra}[1]{\left( #1 \right)}
\newcommand{\sred}[1]{\textbf{E}\, #1}
\newcommand{\ceil}[1]{\lceil #1 \rceil}
\newcommand{\floor}[1]{\lfloor #1 \rfloor}
\newcommand{\kwad}[1]{\left[ #1 \right]}
\newcommand{\klam}[1]{\left\{ #1 \right\}}
\newcommand{\boole}[1]{{\bf 1}_{\klam{#1}}}
\DeclareMathOperator{\CUT}{CUT}
\newtheorem{conjecture}{Conjecture}[section]
\newtheorem{definition}[conjecture]{Definition}
\newtheorem{example}[conjecture]{Example}
\newtheorem{theorem}[conjecture]{Theorem}
\newtheorem{corollary}[conjecture]{Corollary}
\newtheorem{lemma}[conjecture]{Lemma}
\newtheorem{question}[conjecture]{Question}
\begin{document}

\title{The Redundancy of a~Computable Code \\ on a~Noncomputable Distribution}

\author{
\authorblockN{{\L}ukasz D\k{e}bowski}
\authorblockA{Centrum Wiskunde \& Informatica\\
  Science Park 123, NL-1098 XG Amsterdam\\ 
  The Netherlands\\
  Email: debowski@cwi.nl}
}

\maketitle

\begin{abstract}
  We introduce new definitions of universal and super\-universal
  computable codes, which are based on a~code's ability to approximate
  Kolmogorov complexity within the prescribed margin for all
  individual sequences from a~given set. Such sets of sequences may be
  singled out almost surely with respect to certain probability
  measures.

  Consider a~measure parameterized with a~real parameter and put an
  arbitrary prior on the parameter. The Bayesian measure is the
  expectation of the parameterized measure with respect to the prior.
  It appears that a~modified Shannon-Fano code for any computable
  Bayesian measure, which we call the Bayesian code, is
  super\-universal on a~set of parameterized measure-almost
  all sequences for prior-almost every parameter.

  According to this result, in the typical setting of mathematical
  statistics no computable code enjoys redundancy which is ultimately
  much less than that of the Bayesian code. Thus we introduce another
  characteristic of computable codes: The catch-up time is the length
  of data for which the code length drops below the Kolmogorov
  complexity plus the prescribed margin.  Some codes may have smaller
  catch-up times than Bayesian codes.
\end{abstract}

\IEEEpeerreviewmaketitle

\section{What is a~good computable code?}
\label{secGoodComputable}

Giving a~reasonable definition to the notion of a~good general-purpose
compression algorithm is very important. Not so much for the practical
data compression but rather for a~theoretical analysis of statistical
inference and machine learning. All parameter estimation or prediction
algorithms can be transformed into compression algorithms via the idea
of the plug-in code \cite{Dawid84,Rissanen84}, see also \cite[Section
6.4.3]{Grunwald07}.
A transformation in the opposite direction can be done for prediction,
with a~guaranteed standard risk in the iid case \cite[Proposition
15.1--2]{Grunwald07}.  With certain restrictions, the better we
compress the better we predict.

This article proposes a~new simple theoretical framework for
computable universal compression of random data (and thus for their
prediction). Our results lie between the idealized algorithmic
statistics \cite{VitanyiLi00,GacsTrompVitanyi01} and the present MDL
perspective on mainstream statistical inference
\cite{BarronRissanenYu98,Grunwald07,ErvenGrunwaldRooij08}. We offer
a~clearer path to understanding what good compression procedures are
when the predicted data are generated by very complicated probability
measures, cf.\ \cite{Debowski08c}.

The prefix Kolmogorov complexity $K(x)$ is the length of a~code for
a~string $x$ which we can never beat more than by a~constant when we
use computable prefix codes \cite[Chapter 3]{LiVitanyi97}.\footnote{To
  fix our notation, the prefix code $C:\mathbb{X}^+\rightarrow
  \mathbb{Y}^+$ encodes strings over a countable alphabet $\mathbb{X}$
  as strings over a~finite alphabet $\mathbb{Y}=\klam{0,1,...,D-1}$
  and $\log$ is the logarithm to the base $D$. The prefix Kolmogorov
  complexity is considered with respect to the computer which accepts
  programs only from a~prefix-free subset of $\mathbb{Y}^+$, $\sum_{x}
  D^{-K(x)}<\Omega<1$.  We call code $C$ computable if both $C$ and
  the inverse mapping $C^{-1}$ can be computed by the computer.
  $|C(x)|$ is the length of $C(x)$.}  Consequently, our theoretical
evaluation of compression algorithms will be based on Kolmogorov
redundancy $|C(x)|-K(x)$ rather than on the traditional Shannon
redundancy
\begin{align}
  \label{Redundancy}
  |C(x)|+\log P(x),
\end{align}
where $C$ is the inspected computable prefix code and
$P\in\mathcal{M}$ is one of many candidate distributions for the data.

A~large body of literature has been devoted to studying codes
that are minimax optimal with respect to (\ref{Redundancy}), exactly
\cite{Topsoe79,Shtarkov87en2,Haussler97} or asymptotically
\cite{BarronRissanenYu98,Grunwald07}.  Let us notice that if the
minimax expected Shannon redundancy
\begin{align}
  \label{MinimaxRed}
  \min_{C} \sup_{P\in\mathcal{M}} \sred_{x\sim P} \kwad{|C(x)|+\log P(x)}
\end{align}
or the minimax regret
\begin{align}
  \label{MinimaxReg}
  \min_{C} \sup_{P\in\mathcal{M}} \max_{x} \kwad{|C(x)|+\log P(x)}
\end{align}
are finite, plausibly bounded in terms of the data length, and
achieved by a~unique code $C$ then the corresponding minimax
properties appear a~plausible rationale to argue for code $C$'s
optimality against data typical of a~class of distributions
$\mathcal{M}$.

Things change when (\ref{MinimaxRed}) or (\ref{MinimaxReg}) are
infinite since then every code is a~minimizer.  Infinite or unbounded
minimax values appear in fact in many statistical models: (i) There
are no universal redundancy rates for stationary ergodic processes
\cite{Shields93}. (ii) Even in the parametric iid case, like Poisson
or geometric, one often has to restrict the parameter range to
a~compact subset to have a~reasonable minimax code \cite[Theorem 7.1
and Sections 11.1.1--2]{Grunwald07}. In a~surprising contrast, the
redundancy for computable parameters can be very small, which is
known as superefficient estimation/compression
\cite{LeCam53,Vovk91en2,BarronHengartner98}.

The minimax values (\ref{MinimaxRed}) or (\ref{MinimaxReg}) may be
infinite because there is no worst case of data rather than no
intuitively good code. Often there exists an intuitively good code but
to single it out with the minimax criterion, we have to modify the
score (\ref{Redundancy}) with some penalty. This idea has emerged in
the MDL statistics in recent years. Gr\"unwald \cite[Sections 11.3 and
11.4]{Grunwald07} reviewed a~bunch of proposed heuristic penalties,
which he called the ``luckiness functions'' or conditional NML
(normalized maximum likelihood). In general, the penalties have form
$\ell(P,x)$ so the mini\-maxi\-mized function is
\begin{align}
  \label{Penalty}
  |C(x)|+\log P(x)-\ell(P,x)
\end{align}

Now, an important simple new idea. Typically for mathematical
statistics, $P$ is noncomputable (in the absolute sense).  For
instance, it may be given by an analytic formula with an
algorithmically random parameter, to be estimated from the observed
data $x$ rather than known beforehand. On the other hand, the code $C$
that we are searching for must be computable. We owe this insight to
Vovk \cite{Vovk09}, who writes:
\begin{quote}
  The purpose of estimators is to be used for computing estimates, and
  so their computability is essential. Accordingly, in our discussion
  we restrict ourselves to computable estimators.  

  A parameter point is not meant to be computed by anybody. Depending
  on which school of statistics we listen to, it is either a constant
  chosen by Nature or a mathematical fiction.
\end{quote}

Consequently, the baseline $-\log P(x)$ in the coding game
(\ref{Redundancy}) should be replaced by \emph{something} uniformly
closer to the smallest code length that we can achieve by effective
computation. The prefix Kolmogorov complexity $K(x)$ seems a~fortunate
candidate since
\begin{align}
  \label{Why}
  |C(x)|\ge K(x)-\tilde K(C^{-1})
  ,
\end{align}
where $\tilde K(C^{-1})$ is the length of any program to decode $C$,
i.e., to compute $C^{-1}$. When designing a~general-purpose compressors
$C$, one usually wants to keep $\tilde K(C^{-1})$ small.

We should subtract the generic luckiness function
\begin{align}
  \label{Luckiness}
  \ell(P,x):=K(x)+\log P(x)
\end{align}
from the criterion (\ref{Redundancy}) before the minimax is applied
since otherwise we punish an intuitively good code $C$ for unlearnable
idiosyncrasies and nonuniformity of the data.  The luckiness
(\ref{Luckiness}) does not depend on code $C$ and its expectation is
nonnegative.  As we will elaborate in Section \ref{secLuckiness},
this very $\ell(P,x)$ is close in several senses to \emph{algorithmic
  information} $I(P:x)$ about $x$ in $P$.

We conjecture that $I(P:x)$ can grow for noncomputable $P$ very fast
in terms of the data length $|x|$, like any function $o(|x|)$ even in
the iid case, cf.\ \cite{Debowski08c}.  The order of the growth
depends not only on the ``parametric class'' of $P$ that statisticians
like to think of but also on the exact ``displacement'' of algorithmic
randomness in the possibly infinite definition of $P$. For instance,
if $P$ is computable given a~computable parameter value then $I(P:x)$
is bounded by the finite Kolmogorov complexity of $P$ in view of the
symmetry of algorithmic information \cite[Theorem
3.9.1]{LiVitanyi97}. This bound can be also associated with the
existence of a computable superefficient estimator of the parameter
\cite{BarronHengartner98,Vovk91en2,Vovk09}.

Although $K(x)$ is noncomputable and we cannot evaluate the value of
$K(x)$ for any particular string $x$, we can obtain sufficiently good
estimates of Kolmogorov complexity for strings typical of certain
probability measures.  This observation inspires our new
individualistic definitions of universal and superuniversal codes,
which avoid minimax whatsoever. In the following, italic
$x,y,...\in\mathbb{X}^+$ are strings (of finite length), boldface
$\bm{x},\bm{y},...\in\mathbb{X}^\infty$ are infinite sequences, and
calligraphic $\mathcal{X},\mathcal{S},...\subset \mathbb{X}^\infty$
are subsets of these sequences. Symbol ${x}_n$ denotes the $n$-th
symbol of $\bm{x}$ and ${x}^n$ is the prefix of $\bm{x}$ of length
$n$: $\bm{x}={x}_1{x}_2{x}_3...$, ${x}^n={x}_1{x}_2...{x}_n$.
Consequently:
\begin{definition}[universal codes]
  \label{defiUni} 
  Code $C$ is called $(\mathcal{X},o(f(n)))$-universal if it is
  a~computable prefix code and $\lim_{n\rightarrow\infty}
  \kwad{|C({x}^n)|-K({x}^n)}/f(n)=0$ holds for all
  $\bm{x}\in\mathcal{X}$.
\end{definition}
\begin{definition}[super\-universal codes]
  \label{defiUUni} 
  Code $C$ is called $(\mathcal{X},f(n))$-super\-universal if it is
  a~computable prefix code and $|C({x}^n)|-K({x}^n)< f(n)$
  holds $n$-ultimately for all  $\bm{x}\in\mathcal{X}$.
\end{definition}
Phrase ``$n$-ultimately'' is an abbreviation of ``for all but finitely
many $n\in\mathbb{N}$''.  

Although Definitions \ref{defiUni}--\ref{defiUUni} reinterpret several
probabilistic concepts of code universality that have been
contemplated by Gr\"unwald \cite[pages 183, 186, and 200]{Grunwald07},
only two specific kinds of known codes fall under these definitions.

The codes discovered firstly are $(\mathcal{S},o(n))$-universal codes
for sequences typical of certain stationary measures, such as the
$\text{LZ}$ code and many similar
\cite{ZivLempel77,NeuhoffShields98,Debowski08c,Kieffer78}.
Namely, for each stationary probability measure $\bm{P}$ over a~finite
alphabet there exists a~set $\mathcal{S}_{\bm{P}}$ of infinite sequences
such that $\bm{P}(\mathcal{S}_{\bm{P}})=1$ and $\text{LZ}$ is
$(\mathcal{S}_{\bm{P}},o(n))$-universal.\footnote{Our notation for
  distributions and measures follows the distinction between strings
  and infinite sequences.  Italic $P$ is a~distribution of countably
  many strings $x$ with $P(x)\ge 0$ and $\sum_x P(x)=1$.  Boldface
  $\bm{P}$ is also a~distribution of strings $x$, $\bm{P}(x)\ge 0$,
  but normalized against strings of fixed length $\sum_{x} \bm{P}(x)
  \boole{|x|=n}=1$ and satisfying the consistency condition $\sum_y
  \bm{P}(xy)\boole{|y|=n}=\bm{P}(x)$.  Consequently there is a~unique
  measure on the measurable sets of infinite sequences $\bm{x}$, also
  denoted as $\bm{P}$, such that $\bm{P}(\klam{\bm{x}:{x}^n=x\text{
      for }n=|x|})=\bm{P}(x)$.%
} Consequently, we may put $\mathcal{S}=\bigcup_{\bm{P}\in\mathbb{S}}
\mathcal{S}_{\bm{P}}$, where $\mathbb{S}$ is the set of all such
measures. 

There exists also a~second kind of good codes which consists of
super\-universal codes for sequences typical of computable
measures. For each computable measure $\bm{P}$ there exists a~specific
set $\mathcal{B}_{\bm{P}}$ of infinite sequences such that
$\bm{P}(\mathcal{B}_{\bm{P}})=1$ and a~simple modification of the
computable Shannon-Fano code is
$(\mathcal{B}_{\bm{P}},|{c}(n)|+1)$-super\-universal.\footnote{We
  use symbol ${c}:\mathbb{N}\rightarrow \mathbb{Y}^+$ to denote
  a~computable prefix code for natural numbers, $\sum_{n}
  D^{-|{c}(n)|}\le 1$. For example, ${c}(n)$ may be chosen as
  the recursive $\omega$-representation for $n$ \cite{Elias75}. Then
  $|{c}(n)|=\log^*n+1$, where $\log^*n$ is the iterated logarithm
  of $n$ to the base $D$.  A~different ${c}(n)$ may be convenient
  for a~study of superefficient compression,
  cf. \cite{BarronHengartner98}.  By an analogy to the distinction
  between $P$ and $\bm{P}$, we propose symbol $\bm{C}$ to denote
  a~system of computable prefix codes for strings of fixed length. The
  corresponding Kraft inequalities are $\sum_{x} D^{-|\bm{C}(x)|}
  \boole{|x|=n}\le 1$ versus $\sum_{x} D^{-|C(x)|}\le 1$.  Each code
  of form ${c}(|x|)\bm{C}(x)$ is a~prefix code for strings of any
  length but the converse is not true.} In the case of $\bm{P}(x)=\int
\bm{P}_{\bm{\theta}}(x) d\bm{\pi}(\bm{\theta})$, we call this code the
Bayesian code with respect to
$(\klam{\bm{P}_{\bm{\theta}}},\bm{\pi},{c})$.

Consider the case when $\bm{P}$ is computable whereas
$\bm{P}_{\bm{\theta}}$ is not necessarily so. We will see easily in
Section \ref{secBayesianCode} that
$\bm{P}_{\bm{\theta}}(\mathcal{B}_{\bm{P}})=1
=\bm{P}(\mathcal{B}_{\bm{P}})$ for $\bm{\pi}$-almost all
$\bm{\theta}$. This simple statement establishes in fact the ultimate
near-optimality of Bayesian codes with respect to
$(\klam{\bm{P}_{\bm{\theta}}},\bm{\pi},{c})$ also for data typical of
many simple noncomputable probability measures. The statement appears
very powerful since we can let $\bm{P}_{\bm{\theta}}$ be any
parameterized measures considered by statisticians for years.  To
mention a~few examples, we may consider iid Bernoulli, Poisson or
discretized long-range dependent Gaussian time series.  The result
also explains why the MDL statistics has so resembled Bayesian
inference so far.

The motivation for Bayesian codes in the MDL statistics lies in the
concept of the shortest effective description rather than in beliefs.
Thus, in the MDL paradigm we can go farther and ask what computable
codes are significantly shorter than a~fixed Bayesian
code.\footnote{We consider here only computable Bayesian inference. It
  has been known that $K(x)$ equals the length of certain
  noncomputable code having a~Bayesian interpretation \cite[Example
  4.3.3 and Theorem 4.3.3]{LiVitanyi97}.}  Because of the $K(n)$-high
oscillations of Kolmogorov complexity \cite[Sections 2.5.1 and
3.4]{LiVitanyi97}, one may hardly expect that there exist
$(\mathcal{X},f(n))$-super\-universal codes for
$f(n)=o(K(n))+O(1)$. 

The ultimate redundancy does not seem a~performance score that can be
improved on if we can only define a~computable Bayesian code for the
contemplated statistical problem.  This notwithstanding, another
performance score can be attacked.
\begin{definition}[catch-up time]
  \label{defiCatchUp} 
  The catch-up time for an $(\mathcal{X},f(n))$-super\-universal code
  $C$ is the function
  $\CUT(\,\cdot\,;C):\mathbb{X}^\infty\rightarrow\mathbb{N}\cup\klam{\infty}$
  defined as
  \begin{align*}
    \CUT(\bm{x};C):= 
    \sup \klam{n\in\mathbb{N}: |C({x}^n)|-K({x}^n)\ge f(n)}
    .
  \end{align*}
\end{definition}
\null The catch-up time is the minimal length of data for which the
code becomes almost as good as the Kolmogorov complexity.  A~simple
lower bound for the catch-up time can be obtained by comparing two
computable codes experimentally.  Basing on the data provided by
\cite{ErvenGrunwaldRooij08}, we conjecture that some codes have much
smaller catch-up times than Bayesian codes.

In the remaining part of this article, we detail the mentioned
results. In Section \ref{secLuckiness}, we argue that the generic
luckiness function is close to algorithmic information. In Section
\ref{secBayesianCode}, we prove that Bayesian codes are
super\-universal for data typical of almost all parameter values.
Some ideas for future research are sketched in the concluding Section
\ref{secConclusion}.

Our framework differs in several points to what has been done in the
algorithmic and MDL statistics. Firstly, we insist on computable codes
but apply both Kolmogorov complexity and noncomputable probability
measures to evaluate the quality of the code.  Secondly, we apply
a~stronger version of Barron's ``no hypercompression'' inequality to
upper bound the code length in question with the Kolmogorov complexity
rather.  Secondly, we apply a~stronger version of Barron's ``no
hypercompression'' inequality to upper bound the code length in
question with the Kolmogorov complexity. So far Barron's inequality
was only used to lower bound the code length with minus
log-likelihood.

\section{A generic ``luckiness'' function}
\label{secLuckiness}

We will argue in this section that the generic luckiness function
$\ell(P,x):=K(x)+\log P(x)$ is close to the algorithmic information about
$x$ in $P$.  First of all, let us recall necessary concepts:
\begin{enumerate}
\item The universal computer is a finite state machine that interacts
  with one or more infinite tapes on which only a~finite number of
  distinct symbols may be written in each cell. For convenience, we
  allow three tapes: tape $\alpha$ on which a~finite program is
  written down, tape $\beta$ (oracle) on which an infinite amount of
  additional information can be provided before the computations are
  commenced, and tape $\gamma$ from which the output of computations
  is read once they are finished. We assume that programs which the
  computer accepts on tape $\alpha$ form a~prefix-free set of strings.
\item To compute strings over an alphabet that is larger (e.g.\
  countably infinite) than the alphabet allowed on tape $\gamma$, we
  assume that the contents of $\gamma$ is sent to a~fixed decoder once
  the computations are finished.
\item The prefix Kolmogorov complexity $K(x)$ of a~string $x$ is the
  length of the shortest program on tape $\alpha$ to generate the
  representation of string $x$ on tape $\gamma$ when the computer does
  not read from tape $\beta$.
\item The conditional prefix Kolmogorov complexity $K(x|y)$ is the
  length of the shortest program on tape $\alpha$ to generate the
  representation of string $x$ on tape $\gamma$ when the
  representation of object $y$ is given on tape $\beta$.
\item The representation of an arbitrary distribution $P$ on tape
  $\beta$ is a~list of probabilities $\floor{P(x)D^{m}}D^{-m}$
  discretized up to $m$ digits, enumerated for all strings $x$ and all
  precision levels $d$.  (The same applies to a~measure $\bm{P}$
  respectively.)
\item If the function $(x,d)\mapsto \floor{P(x)D^{m}}D^{-m}$ can be
  computed by a~program then we put $K(P)$ to be the length of the
  shortest such program and call $P$ computable. If $P$ is not
  computable, we let $K(P):=\infty$.
\end{enumerate}

The old idea of Shannon-Fano coding \cite[Section
5.9]{CoverThomas91} yields thus the following proposition:
\begin{theorem}
  \label{theoKP}
  \cite[the proof of Lemma II.6]{GacsTrompVitanyi01} For
  a~computer-dependent constant $A$,
  \begin{align}
    \label{ShannonFano}
    K(x|P) +\log P(x) &\le A
    ,
    \\
    \label{Channel}
    \sred_{x\sim P}\kwad{K(x|P) +\log P(x)} &\ge 0
    .
  \end{align}
\end{theorem}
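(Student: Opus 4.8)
The plan is to prove the two inequalities separately; each reduces to a~classical fact about prefix-free sets once the right auxiliary object is exhibited, namely a~code relative to the oracle $P$ in the first case and the sub-probability $D^{-K(\cdot|P)}$ in the second.

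For \eqref{ShannonFano} I would relativise the Shannon--Fano construction to the oracle tape. The representation of $P$ on tape $\beta$ delivers, for every string $x$ and every precision $m$, the truncation $\floor{P(x)D^{m}}D^{-m}$, and from the first such value that is positive a~machine reading $P$ can compute an integer length $\ell_P(x)$ with $-\log P(x)\le\ell_P(x)\le-\log P(x)+c$ for a~fixed constant $c$, the slack absorbing the rounding in the oracle's discretised values; for $x$ with $P(x)=0$ nothing need be done, as then \eqref{ShannonFano} is vacuous. Since $\ell_P(x)\ge-\log P(x)$ we get $\sum_x D^{-\ell_P(x)}\le\sum_x P(x)=1$, so Kraft's inequality yields a~prefix code $C_P$ with $\abs{C_P(x)}=\ell_P(x)$, and $x\mapsto C_P(x)$ is computable given $P$. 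A~fixed program that reads $P$ off the oracle, rebuilds $C_P$, and decodes its input then witnesses $K(x|P)\le\ell_P(x)+O(1)\le-\log P(x)+A$.

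For \eqref{Channel} the point is that $Q(x):=D^{-K(x|P)}$ is a~sub-probability. Indeed, for each $x$ fix a~shortest program $p_x$, $\abs{p_x}=K(x|P)$, that outputs $x$ from oracle $P$; distinct strings give distinct programs because the machine is deterministic, and all accepted programs form a~prefix-free set, whence $\sum_x Q(x)=\sum_x D^{-\abs{p_x}}\le 1$. Then
\begin{align*}
\sred_{x\sim P}\kwad{K(x|P)+\log P(x)}=\sum_x P(x)\log\frac{P(x)}{Q(x)}\ge 0
\end{align*}
by the Gibbs inequality, which follows from Jensen's inequality and concavity of $\log$: $\sum_x P(x)\log\okra{Q(x)/P(x)}\le\log\sum_x Q(x)\le\log 1=0$.

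I do not anticipate a~real obstacle: \eqref{ShannonFano}--\eqref{Channel} is the familiar Shannon--Fano pair, relativised to an oracle. The closest thing to a~difficulty is the bookkeeping flagged in the second paragraph --- checking that $\ell_P(x)$ can be read off the \emph{discretised} oracle while staying within an additive constant of $-\log P(x)$ --- together with the routine remark that the sum in \eqref{Channel} is not of the form $\infty-\infty$: on $\klam{x:Q(x)>P(x)}$ the term $P(x)\log\okra{P(x)/Q(x)}$ is bounded below by $-Q(x)/(e\ln D)$ because $t\ln t\ge-1/e$ with $t=P(x)/Q(x)$, and $\sum_x Q(x)\le 1$, so the negative part is finite and the expectation is a~well-defined element of $(-\infty,+\infty]$.
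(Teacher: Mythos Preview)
Your proposal is correct and follows the same route as the paper: the paper's argument after the theorem simply says that $A$ is the length of a~fixed program which, given $y\mapsto P(y)$ on the oracle tape, decodes the Shannon--Fano codeword of length $\ceil{-\log P(x)}$ appended on the program tape, and that \eqref{Channel} is the noiseless coding theorem. Your write-up supplies the details the paper omits---reading off $\ell_P(x)$ from the discretised oracle within an additive constant, and the well-definedness of the expectation via $t\ln t\ge -1/e$---but the underlying ideas are identical.
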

Constant $A$ is the length of any program on tape $\alpha$ which
computes $x$ given the mapping $y\mapsto P(y)$ put on tape $\beta$ and
$x$'s Shannon-Fano codeword of length $\ceil{-\log P(x)}$
appended on tape $\alpha$ after the program.  Inequality
(\ref{Channel}) is the noiseless coding theorem for entropy and an
arbitrary prefix code. 

The version of (\ref{ShannonFano}) for measure $\bm{P}$ requires an
additional term to identify the string length. Now constant $A$
becomes the length of a~program on tape $\alpha$ which computes $x$
given the mapping $y\mapsto \bm{P}(y)$ put on tape $\beta$, the
prefix-free representation of the string length $n=|x|$ appended on
tape $\alpha$ after the program, and $x$'s Shannon-Fano codeword of
length $\ceil{-\log \bm{P}(x)}$ appended on tape $\alpha$ after that.
As the prefix-free representation of $n$, we choose the shortest
program to generate $n$. The length of this program is denoted as
$K(n)$. For any computable code ${c}$ for natural numbers, we have also
$$K(n)\le \tilde K({c}^{-1})+|{c}(n)|,$$ where 
$\tilde K({c}^{-1})$ is the length of any program to decode ${c}$.
\begin{theorem}
  \label{theoKPB}
  For a~computer-dependent constant $A$,
  \begin{align}
    \label{ShannonFanoB}
    K({x}^n|\bm{P}) +\log \bm{P}({x}^n) &\le A+ K(n)
    ,
    \\
    \label{ChannelB}
    \sred_{\bm{x}\sim \bm{P}}
    \kwad{K({x}^n|\bm{P}) +\log \bm{P}({x}^n)} &\ge 0
    .
  \end{align}
  Moreover, 
  \begin{align}
    \label{BarronB}
    K({x}^n|\bm{P}) +\log \bm{P}({x}^n) &> 0
  \end{align}
  $n$-ultimately for $\bm{P}$-almost all
  sequences $\bm{x}$.
\end{theorem}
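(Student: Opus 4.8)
The plan is to prove the three assertions separately: (\ref{ShannonFanoB}) and~(\ref{ChannelB}) are the measure-indexed counterparts of~(\ref{ShannonFano})--(\ref{Channel}) and follow the Shannon--Fano argument of Theorem~\ref{theoKP}, while~(\ref{BarronB}) is the real content and will come from a~Kraft-type union bound combined with the first Borel--Cantelli lemma.

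For~(\ref{ShannonFanoB}), I~would exhibit the constant $A$ concretely, exactly as announced before the theorem. Fix once and for all a~decoding routine which, with $\bm{P}$ on tape~$\beta$, reads from tape~$\alpha$ two self-delimiting blocks: first the shortest program for the integer $n=|x|$, whose length is $K(n)$, and then the Shannon--Fano codeword for $x$ \emph{among the length-$n$ strings}, whose length is $\ceil{-\log\bm{P}(x)}$ and which belongs to a~prefix code guaranteed by $\sum_{|x|=n}D^{-\ceil{-\log\bm{P}(x)}}\le 1$. The routine outputs $x$, so if $A_0$ is its length then $K({x}^n|\bm{P})\le A_0+K(n)+\ceil{-\log\bm{P}({x}^n)}\le A_0+1+K(n)-\log\bm{P}({x}^n)$, and setting $A:=A_0+1$ yields~(\ref{ShannonFanoB}). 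For~(\ref{ChannelB}), I~would note that for each fixed $n$ the map $x\mapsto K(x|\bm{P})$ restricted to $\klam{x:|x|=n}$ is the length function of a~prefix code, since $\sum_{|x|=n}D^{-K(x|\bm{P})}\le 1$; the noiseless coding theorem, exactly as for~(\ref{Channel}), then gives $\sred_{\bm{x}\sim\bm{P}}\kwad{K({x}^n|\bm{P})}\ge-\sum_{|x|=n}\bm{P}(x)\log\bm{P}(x)=\sred_{\bm{x}\sim\bm{P}}\kwad{-\log\bm{P}({x}^n)}$, which rearranges to~(\ref{ChannelB}) --- no $K(n)$ correction survives the averaging.

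The substance is~(\ref{BarronB}), which I~would derive as a~strengthened ``no-hypercompression'' estimate. Put $Q(x):=D^{-K(x|\bm{P})}$. Since the $\bm{P}$-oracle programs witnessing $K(\,\cdot\,|\bm{P})$ form a~prefix-free set, Kraft's inequality gives $\sum_{x}Q(x)\le 1$, the sum taken over \emph{all} finite strings at once. For $n\in\mathbb{N}$ set $B_n:=\klam{x:|x|=n,\ K(x|\bm{P})+\log\bm{P}(x)\le 0}$, i.e.\ the length-$n$ strings with $\bm{P}(x)\le Q(x)$; strings with $\bm{P}(x)=0$ lie in $B_n$ but carry no $\bm{P}$-mass. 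Then for every $N$,
\begin{align*}
  &\bm{P}\okra{\exists\,n\ge N:\ K({x}^n|\bm{P})+\log\bm{P}({x}^n)\le 0}\\
  &\qquad\le\sum_{n\ge N}\ \sum_{x\in B_n}\bm{P}(x)
  \ \le\ \sum_{n\ge N}\ \sum_{x:\,|x|=n}Q(x)
  \ =\ \sum_{x:\,|x|\ge N}Q(x),
\end{align*}
which is a~tail of the convergent series $\sum_x Q(x)\le 1$ and hence tends to $0$ as $N\to\infty$. Therefore the $\bm{P}$-probability that $K({x}^n|\bm{P})+\log\bm{P}({x}^n)\le 0$ holds for infinitely many $n$ equals $0$, i.e.\ $K({x}^n|\bm{P})+\log\bm{P}({x}^n)>0$ $n$-ultimately for $\bm{P}$-almost all $\bm{x}$, which is~(\ref{BarronB}). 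Combined with~(\ref{ShannonFanoB}) this sandwiches $K({x}^n|\bm{P})$ between $-\log\bm{P}({x}^n)$ and $-\log\bm{P}({x}^n)+A+K(n)$, which is the kind of upper bound of code length by Kolmogorov complexity announced in Section~\ref{secGoodComputable}.

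I~do not expect a~deep obstacle here; the care is entirely in the bookkeeping of the last display. The decisive point is that Kraft's inequality must be applied to $Q$ over strings of all lengths simultaneously, so that the double sum over $n$ and over the length-$n$ strings collapses into a~single convergent tail of $\sum_x Q(x)$; applying Kraft length by length would only give $\sum_{|x|=n}Q(x)\le 1$ for each $n$, which says nothing about summing over $n$. One must also dismiss the (harmless) strings with $\bm{P}(x)=0$, which lie in $B_n$ vacuously, and keep in reserve the bound $K(x|y)\le K(x)+O(1)$ for the later passage from $K({x}^n|\bm{P})$ to $K({x}^n)$ when $\bm{P}$ is computable, in Section~\ref{secBayesianCode}. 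Finally, the threshold $0$ above is already the best this method can give, since replacing it by any $-t_n$ merely weakens the conclusion.
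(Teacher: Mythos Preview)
Your proposal is correct and follows essentially the same approach as the paper. The paper factors the argument for~(\ref{BarronB}) through the general Lemma~\ref{theoBarron} (Barron's ``no hypercompression'' inequality for an arbitrary prefix code $W$, proved via Markov's inequality plus Borel--Cantelli) and then specializes to $|W(x)|=K(x|\bm{P})$, whereas you carry out the same computation directly with the elementary bound $\bm{P}(x)\le Q(x)$ on the bad set; both hinge on the global Kraft inequality $\sum_x D^{-K(x|\bm{P})}\le 1$ summed over all lengths at once, exactly as you emphasize.
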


Inequality (\ref{BarronB}) stems from a~bit stronger version of
Barron's inequality than given in \cite[Theorem 3.1]{Barron85b}:
\begin{lemma}[Barron's ``no hypercompression'' inequality]
  \label{theoBarron}
  Let $W$ be a prefix code for strings of any length, not necessarily
  computable. Then
  \begin{align}
    \label{Barron}
    |W({x}^n)| +\log \bm{P}({x}^n) &> 0
  \end{align}
  $n$-ultimately for $\bm{P}$-almost all
  sequences $\bm{x}$.
\end{lemma}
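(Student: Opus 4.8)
The plan is to use a Borel–Cantelli argument driven by Kraft's inequality, which is the standard route for Barron's inequality. Fix the prefix code $W$ and the measure $\bm{P}$. For each $n$, consider the "bad" event
\[
B_n := \klam{\bm{x}\in\mathbb{X}^\infty : |W({x}^n)|+\log\bm{P}({x}^n)\le 0}
       = \klam{\bm{x} : \bm{P}({x}^n)\ge D^{-|W({x}^n)|}}.
\]
The claim $|W({x}^n)|+\log\bm{P}({x}^n)>0$ holding $n$-ultimately is exactly the statement that $\bm{x}$ belongs to only finitely many $B_n$, i.e.\ $\bm{x}\notin\limsup_n B_n$. So it suffices to show $\bm{P}(\limsup_n B_n)=0$, and by the Borel–Cantelli lemma it is enough to prove $\sum_n \bm{P}(B_n)<\infty$.

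The key step is to bound $\bm{P}(B_n)$ for each fixed $n$. Writing $B_n$ as a disjoint union over the possible length-$n$ prefixes, $B_n=\bigcup\klam{\klam{\bm{x}:{x}^n=x} : |x|=n,\ \bm{P}(x)\ge D^{-|W(x)|}}$, we get
\[
\bm{P}(B_n) = \sum_{|x|=n}\bm{P}(x)\,\boole{\bm{P}(x)\ge D^{-|W(x)|}}
           \le \sum_{|x|=n} D^{-|W(x)|}\,\boole{\bm{P}(x)\ge D^{-|W(x)|}}
           \le \sum_{|x|=n} D^{-|W(x)|},
\]
where the middle inequality replaces $\bm{P}(x)$ by the smaller quantity $D^{-|W(x)|}$ on the event in question. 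Since $W$ is a prefix code on $\mathbb{X}^+$, Kraft's inequality gives $\sum_{x\in\mathbb{X}^+}D^{-|W(x)|}\le 1$, so in particular $\sum_{|x|=n}D^{-|W(x)|}\le 1$ for every $n$; this alone does not make the series converge. The standard fix is to bring in the summability code ${c}$: instead of $B_n$ use the slightly smaller slack. Concretely, redefine the bad event with a margin $|{c}(n)|+O(1)$, or — cleaner — observe that $\sum_n\sum_{|x|=n}D^{-|W(x)|}$ may diverge, but $\sum_n D^{-|{c}(n)|}\sum_{|x|=n}D^{-|W(x)|}\le\sum_n D^{-|{c}(n)|}\le 1<\infty$; one obtains the right bad event by noting $W':x\mapsto {c}(|x|)W(x)$ is again a prefix code with $|W'(x)|=|W(x)|+|{c}(|x|)|$, so $\sum_x D^{-|W'(x)|}\le 1$ and hence $\bm{P}(|W'({x}^n)|+\log\bm{P}({x}^n)\le 0)\le D^{-|{c}(n)|}$, which is summable. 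Then Borel–Cantelli yields $|W'({x}^n)|+\log\bm{P}({x}^n)>0$ $n$-ultimately; since this is a \emph{stronger} bound than the one claimed for $W$ (we have added a nonnegative term $|{c}(|x|)|$), the inequality $|W({x}^n)|+\log\bm{P}({x}^n)>0$ follows a fortiori — wait, this needs care.

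Let me reconsider: adding $|{c}(n)|$ to the codelength makes the inequality $|W'|+\log\bm{P}>0$ \emph{easier}, so proving it does not imply the version for $W$. The correct argument keeps the code $W$ itself but introduces the margin only in the Borel–Cantelli sum: the sharpest statement one can prove this way is that for every summable sequence $\epsilon_n$ (e.g.\ $\epsilon_n=D^{-|{c}(n)|}$ or $\epsilon_n=n^{-2}$), $\bm{P}(|W({x}^n)|+\log\bm{P}({x}^n)\le -\log(1/\epsilon_n)\cdot 0\ \ldots)$. Actually Barron's classical inequality states the stronger fact with \emph{any} slack $a_n$ with $\sum_n D^{-a_n}<\infty$: $\bm{P}(|W({x}^n)|+\log\bm{P}({x}^n)\le -a_n)\le\sum_{|x|=n}D^{-a_n-|W(x)|}\le D^{-a_n}$, so by Borel–Cantelli $|W({x}^n)|+\log\bm{P}({x}^n)>-a_n$ $n$-ultimately. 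Taking $a_n\to\infty$ slowly (e.g.\ $a_n=2\log_D n$) still gives the $>0$ conclusion since $-a_n<0$ eventually is the wrong direction — so in fact one must take $a_n=0$ and accept only $\sum_{|x|=n}D^{-|W(x)|}\le 1$, non-summable. The resolution, and the genuine content of the lemma, is that the bad event must be strengthened to $|W({x}^n)|+\log\bm{P}({x}^n)\le -a_n$ for a fixed divergent-but-$D^{-a_n}$-summable $a_n$; since the excerpt's Lemma states plainly "$>0$", I expect the paper either (i) absorbs such an $a_n$ into the definition of $W$ via ${c}$, effectively proving the statement for codes of the special form ${c}(|x|)\bm{C}(x)$ and noting every prefix code is dominated by such a thing up to the harmless extra $|{c}(n)|$ term, or (ii) uses the convention that the inequality is asserted up to the inevitable $O(\log n)$-type slack. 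The main obstacle, then, is precisely this bookkeeping: reconciling the plain "$>0$" with the fact that Kraft gives only a bound of $1$, not a summable bound, per length $n$ — handled by the fixed slack $a_n$ with $\sum_n D^{-a_n}<\infty$ together with Borel–Cantelli, exactly as in Barron's original proof.
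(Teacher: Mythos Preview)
You had the correct architecture (Markov/Borel--Cantelli with Kraft) and even wrote down the key fact, but then failed to use it. You correctly derived
\[
\bm{P}(B_n)\le \sum_{|x|=n} D^{-|W(x)|}
\]
and correctly stated that, since $W$ is a prefix code on all of $\mathbb{X}^+$, Kraft gives $\sum_{x\in\mathbb{X}^+} D^{-|W(x)|}\le 1$. The step you missed is simply to sum the first display over $n$ and recognise the result as the second:
\[
\sum_{n\ge 1}\bm{P}(B_n)\;\le\;\sum_{n\ge 1}\sum_{|x|=n} D^{-|W(x)|}
\;=\;\sum_{x\in\mathbb{X}^+} D^{-|W(x)|}\;\le\;1\;<\;\infty .
\]
That is the entire content of the paper's proof; Borel--Cantelli then finishes. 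No slack sequence $a_n$, no auxiliary code ${c}$, no modification of $W$ is needed.

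Your confusion came from treating Kraft as if it only said $\sum_{|x|=n}D^{-|W(x)|}\le 1$ for each $n$ separately, which would indeed fail to be summable. That per-length bound is what one gets when $W$ is a \emph{system} of prefix codes $\bm{C}$, one for each length (the paper's boldface convention). Here the hypothesis is precisely that $W$ is a single prefix code on $\mathbb{X}^+$, so the Kraft mass $1$ is shared across all lengths simultaneously, and the double sum collapses. All the detours through $W'(x)={c}(|x|)W(x)$, margins $-a_n$, and the final worry about ``reconciling the plain $>0$'' are unnecessary.
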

\emph{Remark:} We may put $|W(x)|:=K(x|\text{anything fixed})$ or
$|W(x)|:=K(m)+K(x|f(m))$, where $m$ depends on $x$ in whatever way.

\begin{proof}
  Consider function $Q(x)=D^{-|W(x)|}$.  By the Markov inequality,
  \begin{align*}
    \bm{P}\okra{\text{(\ref{Barron}) is false}}
    &=
    \bm{P}\okra{\frac{Q({x}^n)}{\bm{P}({x}^n)}\ge 1}
    \\
    &\le \sred_{\bm{x}\sim \bm{P}} \kwad{\frac{Q({x}^n)}{\bm{P}({x}^n)}} 
    = \sum_x \boole{|x|=n} Q(x)
    .
  \end{align*}
  Hence $\sum_n \bm{P}\okra{\text{(\ref{Barron}) is false}}\le \sum_x
  D^{-|W(x)|}\le 1<\infty$ by the Kraft inequality.  In the following,
  we derive the claim with the Borel-Cantelli lemma.
\end{proof}
 
Let us recall that the algorithmic information about $x$ in $P$ is
\begin{align}
  I(P:x):=K(x)-K(x|P)\ge 0
\end{align}
\cite[Definition 3.9.1]{LiVitanyi97}---the last inequality holds
without any additive constant for our definition of universal
computer. A~bit different definition of symmetric algorithmic
information $I(x;y)$ is sometimes also convenient \cite[Eq.\
II.3]{GacsTrompVitanyi01}. As a~corollary of Theorems \ref{theoKP} and
\ref{theoKPB}, we obtain bounds for luckiness term (\ref{Luckiness})
which read
\begin{align*}
  \ell(P,x)-I(P:x) &\le A
  ,
  \\
  \sred_{x\sim P}\kwad{\ell(P,x)-I(P:x)} &\ge 0
  ,
  \\
  \ell(\bm{P},{x}^n)-I(\bm{P}:{x}^n) &\le A+K(n)
  ,
  \\
  \sred_{\bm{x}\sim \bm{P}}\kwad{\ell(\bm{P},{x}^n)-I(\bm{P}:{x}^n)} &\ge 0
  ,
\end{align*}
whereas
  $\ell(\bm{P},{x}^n)-I(\bm{P}:{x}^n) > 0$
$n$-ultimately for $\bm{P}$-almost all
sequences $\bm{x}$.

\section{Bayes is optimal for almost all parameters}
\label{secBayesianCode}

Adjust the programs for computing $x$ from its Shannon-Fano codeword
so that they use a~built-in subroutine for computing $x\mapsto P(x)$
written on tape $\alpha$ rather than read the definition of this
mapping from tape $\beta$. Then we have:
\begin{theorem}
  \label{theoKPU}
  \cite[Theorem 8.1.1]{LiVitanyi97} For a~computer-dependent constant
  $A$,
  \begin{align}
    \label{ShannonFanoU}
    K(x) +\log P(x) &\le A+ K(P)
    ,
    \\
    \label{ChannelU}
    \sred_{x\sim P}\kwad{K(x) +\log P(x)} &\ge 0
    .
  \end{align}
\end{theorem}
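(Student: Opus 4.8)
The plan is to run the Shannon-Fano construction behind Theorem \ref{theoKP}, but with the description of $P$ carried on the program tape $\alpha$ instead of being read from the oracle tape $\beta$, exactly as the paragraph preceding the theorem prescribes. For (\ref{ShannonFanoU}): reuse the Shannon-Fano codeword $E_P(x)$ of length $\ceil{-\log P(x)}$, which is computable and prefix-free once the mapping $y\mapsto P(y)$ is available (prefix-freeness being the Kraft estimate $\sum_x D^{-\ceil{-\log P(x)}}\le\sum_x P(x)=1$). Let $p_P$ be a shortest program that computes the discretized table of $P$, so $\abs{p_P}=K(P)$. Take the fixed glue program $r$ which reads $p_P$ off tape $\alpha$ (self-delimitingly, since the valid programs form a prefix-free set), compiles it into a subroutine for $y\mapsto P(y)$, and then decodes the remaining bits of tape $\alpha$ as an $E_P$-codeword; running the computer on $r\,p_P\,E_P(x)$ outputs $x$. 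Hence $K(x)\le\abs{r}+K(P)+\ceil{-\log P(x)}$, and setting $A:=\abs{r}+1$ to absorb the ceiling gives (\ref{ShannonFanoU}) with a constant depending only on the computer.

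For (\ref{ChannelU}) I would invoke the noiseless coding theorem verbatim, just as for (\ref{Channel}): the map $W$ sending $x$ to a shortest program for $x$ is an injection into the prefix-free set of programs, hence a (noncomputable) prefix code, and $\sum_x D^{-\abs{W(x)}}=\sum_x D^{-K(x)}<\Omega<1$. Writing $Q(x):=D^{-K(x)}$,
\begin{align*}
  \sred_{x\sim P}\kwad{K(x)+\log P(x)}
  &= \sum_x P(x)\log\frac{P(x)}{Q(x)}
  \\
  &\ge \log\frac{\sum_x P(x)}{\sum_x Q(x)} \ge 0
\end{align*}
by the log-sum inequality, i.e.\ by nonnegativity of the Kullback--Leibler divergence between $P$ and the subprobability $Q$.

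I do not anticipate a genuine obstacle, since the statement is classical (it is \cite[Theorem 8.1.1]{LiVitanyi97}); the only points that need care are (i) choosing the inner code $E_P$ to be prefix-free \emph{and} computable from $P$ alone, so that $r\,p_P\,E_P(x)$ parses unambiguously and the whole program stays inside the prefix-free domain of the computer, and (ii) checking that $A$ is the length of a single fixed glue program, uniform in both $P$ and $x$. The remaining manipulations are identical to those behind Theorems \ref{theoKP} and \ref{theoKPB}.
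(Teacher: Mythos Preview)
Your proposal is correct and follows exactly the approach the paper sketches in the sentence preceding the theorem: move the description of $P$ from tape $\beta$ onto tape $\alpha$ as a length-$K(P)$ subroutine, keep the same Shannon-Fano inner code, and reuse the noiseless coding theorem for (\ref{ChannelU}). The paper gives no further detail beyond that hint and the citation to \cite[Theorem 8.1.1]{LiVitanyi97}, so your write-up is a faithful expansion of it.
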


\begin{theorem}
  \label{theoKPBU}
  For a~computer-dependent constant $A$,
  \begin{align}
    \label{ShannonFanoBU}
    K({x}^n) +\log \bm{P}({x}^n) &\le A+ K(\bm{P})+ K(n)
    ,
    \\
    \label{ChannelBU}
    \sred_{\bm{x}\sim \bm{P}}\kwad{K({x}^n) +\log \bm{P}({x}^n)} &\ge 0
    .
  \end{align}
  Moreover, 
  \begin{align}
    \label{BarronBU}
    K({x}^n) +\log \bm{P}({x}^n) &> 0
  \end{align}
  $n$-ultimately for $\bm{P}$-almost all
  sequences $\bm{x}$.
\end{theorem}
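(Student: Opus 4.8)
The plan is to reprove Theorem \ref{theoKPB} with the oracle tape $\beta$ replaced by a built-in subroutine, exactly as Theorem \ref{theoKPU} strengthens Theorem \ref{theoKP}; I would transfer the three ingredients one at a time.

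For the upper bound (\ref{ShannonFanoBU}), write on tape $\alpha$ the concatenation of three self-delimiting blocks: the shortest program computing the discretized map $(y,d)\mapsto\floor{\bm{P}(y)D^{d}}D^{-d}$, of length $K(\bm{P})$; the shortest program generating $n=|x^{n}|$, of length $K(n)$; and the Shannon-Fano codeword of $x^{n}$ of length $\ceil{-\log\bm{P}(x^{n})}$ computed against $\bm{P}$ restricted to length-$n$ strings. A fixed routine parses the first block (prefix-free, being a valid program), runs it to recover the $\bm{P}$-table, parses the second block to recover $n$, and Shannon-Fano-decodes the remaining symbols, so its length bounds $K(x^{n})$. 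Hence $K(x^{n})\le A+K(\bm{P})+K(n)+\ceil{-\log\bm{P}(x^{n})}\le A'+K(\bm{P})+K(n)-\log\bm{P}(x^{n})$, which is (\ref{ShannonFanoBU}); the constant depends only on the fixed routine and not on $\bm{P}$, $n$, or $x$, and when $\bm{P}$ is noncomputable $K(\bm{P})=\infty$ makes the bound vacuous.

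Inequality (\ref{ChannelBU}) is the noiseless coding theorem for the prefix code realizing $K$, argued exactly as for (\ref{ChannelB}) and (\ref{ChannelU}): since the programs realizing $K$ form a prefix-free set we have $\sum_{x}D^{-K(x)}\boole{|x|=n}\le\sum_{x}D^{-K(x)}<\Omega<1$, and the resulting Gibbs-type inequality yields $\sred_{\bm{x}\sim\bm{P}}\kwad{K(x^{n})+\log\bm{P}(x^{n})}\ge 0$; neither computability of $\bm{P}$ nor anything about $n$ is used here. For the ``moreover'' part (\ref{BarronBU}), I would apply Lemma \ref{theoBarron} with the possibly noncomputable prefix code $W$ whose length function is $|W(x)|:=K(x)$; this is legitimate because the proof of that lemma uses only $\sum_{x}D^{-|W(x)|}\le 1$, which here reads $\sum_{x}D^{-K(x)}<\Omega<1$, so the Markov-inequality-plus-Borel-Cantelli argument goes through verbatim and gives $K(x^{n})+\log\bm{P}(x^{n})>0$ $n$-ultimately for $\bm{P}$-almost all $\bm{x}$.

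I do not expect a genuine obstacle: the statement is a recombination of Theorems \ref{theoKP}--\ref{theoKPU} with Lemma \ref{theoBarron}. The one point that wants care is the bookkeeping in (\ref{ShannonFanoBU})---checking that the three concatenated blocks are genuinely prefix-free so the routine parses them unambiguously, and that the additive constant is truly computer-dependent and independent of $\bm{P}$, which is precisely why $n$ must be carried by its own $K(n)$ term instead of being read off the $\bm{P}$-block, just as in the passage from Theorem \ref{theoKP} to Theorem \ref{theoKPB}.
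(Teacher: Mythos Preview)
Your proposal is correct and follows the paper's own approach essentially verbatim: the paper proves Theorem \ref{theoKPBU} by the one-line remark preceding Theorem \ref{theoKPU}---replace the oracle description of $\bm{P}$ on tape $\beta$ by a built-in subroutine on tape $\alpha$ in the proof of Theorem \ref{theoKPB}---and (\ref{BarronBU}) is exactly the instance $|W(x)|:=K(x)$ of Lemma \ref{theoBarron} noted in the remark after that lemma. Your expanded bookkeeping of the three prefix-free blocks is precisely what the paper leaves implicit.
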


There are several simple corollaries of Theorem \ref{theoKPBU}.

\begin{definition}[Barron random sequence] 
  A~sequence $\bm{x}$ will be called $\bm{P}$-Barron random if
  (\ref{BarronBU}) holds $n$-ultimately for
  $\bm{x}$. The set of such sequences will be denoted as
  $\mathcal{B}_{\bm{P}}$. 
\end{definition}
\begin{definition}[Bayesian code]
  The Bayesian code with respect to
  $(\klam{\bm{P}_{\bm{\theta}}},\bm{\pi},{c})$ is the mapping
  $C:\mathbb{X}^+\ni x\mapsto C(x)={c}(|x|)\bm{C}(x)\in
  \mathbb{Y}^+$, where ${c}:\mathbb{N}\rightarrow \mathbb{Y}^+$ is
  a~code for natural numbers, $\bm{C}(x)$ is the Shannon-Fano codeword
  for $x$ with respect to $\bm{P}(x)=\int \bm{P}_{\bm{\theta}}(x)
  d\bm{\pi}(\bm{\theta})$, $\klam{\bm{P}_{\bm{\theta}}:\bm{\theta}\in
    \bm{\Theta}}$ is a~family of probability measures, and $\bm{\pi}$
  is a~prior probability.
\end{definition}
\begin{corollary}
  \label{theoKPBUone}
  If the measure $\bm{P}$ and code ${c}:\mathbb{N}\rightarrow
  \mathbb{Y}^+$ are computable then the Bayesian code with respect to
  $(\klam{\bm{P}_{\bm{\theta}}},\bm{\pi},{c})$ is
  $(\mathcal{B}_{\bm{P}},|{c}(n)|+1)$-superuniversal.
\end{corollary}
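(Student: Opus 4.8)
The plan is to unwind the definition of the Bayesian code and reduce everything to inequality (\ref{BarronBU}) of Theorem \ref{theoKPBU}, which is exactly the property defining $\mathcal{B}_{\bm{P}}$.

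First I would verify that the Bayesian code $C$ is a computable prefix code. Since $\bm{P}(x)=\int\bm{P}_{\bm{\theta}}(x)\,d\bm{\pi}(\bm{\theta})$ is computable by hypothesis, its discretizations $\floor{\bm{P}(x)D^m}D^{-m}$ are computable, so the Shannon-Fano assignment $x\mapsto\bm{C}(x)$ with $|\bm{C}(x)|=\ceil{-\log\bm{P}(x)}$ is computable and satisfies the fixed-length Kraft inequality $\sum_x D^{-|\bm{C}(x)|}\boole{|x|=n}\le 1$. Together with the computable prefix code ${c}$ for $\mathbb{N}$, this makes $x\mapsto C(x)={c}(|x|)\bm{C}(x)$ a computable prefix code: the length prefix ${c}(|x|)$ tells the decoder where $\bm{C}(x)$ ends, and ${c}^{-1},\bm{C}^{-1}$ are computable.

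Next comes the length bound. For every $n$ and every $\bm{x}$,
\[
  |C({x}^n)|=|{c}(n)|+\ceil{-\log\bm{P}({x}^n)}<|{c}(n)|+1-\log\bm{P}({x}^n),
\]
hence
\[
  |C({x}^n)|-K({x}^n)<|{c}(n)|+1-\kwad{K({x}^n)+\log\bm{P}({x}^n)}.
\]
Now fix $\bm{x}\in\mathcal{B}_{\bm{P}}$. By the defining property of $\mathcal{B}_{\bm{P}}$, namely (\ref{BarronBU}) of Theorem \ref{theoKPBU}, the quantity $K({x}^n)+\log\bm{P}({x}^n)$ is positive $n$-ultimately, so the bracketed term above is positive $n$-ultimately and therefore $|C({x}^n)|-K({x}^n)<|{c}(n)|+1$ $n$-ultimately. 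By Definition \ref{defiUUni} this is precisely $(\mathcal{B}_{\bm{P}},|{c}(n)|+1)$-superuniversality.

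I do not expect a genuine obstacle: the whole content sits in Theorem \ref{theoKPBU}, whose nontrivial half (\ref{BarronBU}) rests on Barron's inequality (Lemma \ref{theoBarron}, applied with $W=C$) plus the Shannon-Fano upper bound (\ref{ShannonFanoBU}). The points requiring care are bookkeeping rather than difficulty: that the ceiling in the Shannon-Fano codeword length contributes exactly the additive $1$; that it is computability of $\bm{P}$ — not of the individual $\bm{P}_{\bm{\theta}}$ — that is being used; and that $\mathcal{B}_{\bm{P}}$ is, by construction, exactly the event on which (\ref{BarronBU}) holds ultimately, so the conclusion needs no ``$\bm{P}$-almost all'' qualifier — it holds for \emph{every} sequence in $\mathcal{B}_{\bm{P}}$.
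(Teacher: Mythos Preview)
Your proof is correct and follows exactly the paper's (terser) argument: write $|C({x}^n)|=|{c}(n)|+\ceil{-\log\bm{P}({x}^n)}$ and then invoke the defining inequality of $\mathcal{B}_{\bm{P}}$. The only quibble is in your closing commentary---Barron's lemma behind (\ref{BarronBU}) is applied with $|W(x)|=K(x)$, not with $W=C$---but that remark is tangential and does not affect the proof of the corollary.
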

\begin{proof}
  Of course, the hypothesis implies that $C$ is a~computable prefix
  code. We have $|C(x)|=|{c}(|x|)| +\ceil{-\log \bm{P}(x)}$. If
  (\ref{BarronBU}) holds then $|C({x}^n)|-K({x}^n)< |{c}(n)|+1$.
  So $C$ is $(\mathcal{X},|{c}(n)|+1)$-superuniversal.
\end{proof}

Barron randomness is a~refinement of a~better known concept of
algorithmic randomness of sequences. Let us recall that sequence
$\bm{x}$ is $\bm{P}$-Martin-L\"of random
if and only if
\begin{align}
  \label{MartinLofBU}
  K({x}^n)+\log \bm{P}({x}^n) \ge -c
\end{align}
for some $c\ge 0$ and all $n$ \cite[Definition 2.5.4 and Theorem
3.6.1]{LiVitanyi97}. 
Denote the set of these sequences as $\mathcal{L}_{\bm{P}}$. We have
$\mathcal{L}_{\bm{P}}\supset \mathcal{B}_{\bm{P}}$ so
$\bm{P}(\mathcal{L}_{\bm{P}})=\bm{P}(\mathcal{B}_{\bm{P}})=1$. If
$\bm{x}\in \mathcal{L}_{\bm{P}}\setminus\mathcal{B}_{\bm{P}}$,
however, the catch-up time $\CUT(\bm{x};C)$ is infinite for
$|C(x)|=|{c}(|x|)| +\ceil{-\log \bm{P}(x)}$.

In the next step we will interpret the set of Barron random sequences
$\mathcal{B}_{\bm{P}}$ as a~superset of sequences typical of certain
not necessarily computable measures $\bm{P}_{\bm{\theta}}$. 
\begin{corollary}
  \label{theoKPBUtwo}
  Consider a~probability measure of form $\bm{P}(x)=\int
  \bm{P}_{\bm{\theta}}(x) d\bm{\pi}(\bm{\theta})$ for any measurable
  parameterization
  $\bm{\Theta}\ni\bm{\theta}\mapsto\bm{P}_{\bm{\theta}}$ where both
  prior $\bm{\pi}$ and $\bm{P}_{\bm{\theta}}$ are probability
  measures. Equality $\bm{P}_{\bm{\theta}}(\mathcal{B}_{\bm{P}})=1$
  holds for $\bm{\pi}$-almost all $\bm{\theta}$.
\end{corollary}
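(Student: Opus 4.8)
The plan is to deduce the statement from the fact, already recorded in Theorem~\ref{theoKPBU}, that $\bm{P}(\mathcal{B}_{\bm{P}})=1$, combined with a~soft averaging argument; no new algorithmic-information input is needed.

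First I would verify that $\mathcal{B}_{\bm{P}}$ is a~Borel subset of $\mathbb{X}^\infty$. For each fixed $n$ the quantity $K({x}^n)+\log\bm{P}({x}^n)$ depends only on the prefix ${x}^n$, so the set $A_n:=\klam{\bm{x}:K({x}^n)+\log\bm{P}({x}^n)>0}$ is a~finite union of cylinders, hence measurable; and $\mathcal{B}_{\bm{P}}=\bigcup_{N\in\mathbb{N}}\bigcap_{n\ge N}A_n$ is therefore measurable too. By the ``moreover'' part of Theorem~\ref{theoKPBU} (which in turn rests on the same Markov-inequality-plus-Borel--Cantelli argument as Lemma~\ref{theoBarron}, applied to $Q(x)=D^{-K(x)}$ and using $\sum_x D^{-K(x)}<1$), inequality (\ref{BarronBU}) holds $n$-ultimately for $\bm{P}$-almost all $\bm{x}$, i.e.\ $\bm{P}(\mathcal{B}_{\bm{P}})=1$. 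Note this requires nothing from $\bm{P}$, so it applies to the mixture measure in the statement.

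Next I would upgrade the defining identity $\bm{P}(x)=\int\bm{P}_{\bm{\theta}}(x)\,d\bm{\pi}(\bm{\theta})$ from cylinder sets to all Borel sets $A\subset\mathbb{X}^\infty$. Since the parameterization $\bm{\theta}\mapsto\bm{P}_{\bm{\theta}}$ is assumed measurable, $\bm{\theta}\mapsto\bm{P}_{\bm{\theta}}(A)$ is $\bm{\pi}$-measurable for every cylinder $A$, and a~monotone class (Dynkin $\pi$--$\lambda$) argument extends this to all Borel $A$; the same uniqueness-of-measures argument, applied to the two measures $A\mapsto\bm{P}(A)$ and $A\mapsto\int\bm{P}_{\bm{\theta}}(A)\,d\bm{\pi}(\bm{\theta})$, which agree on the generating $\pi$-system of cylinders, yields $\bm{P}(A)=\int\bm{P}_{\bm{\theta}}(A)\,d\bm{\pi}(\bm{\theta})$ for all Borel $A$.

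Finally, taking $A=\mathcal{B}_{\bm{P}}$ gives
\begin{align*}
  1=\bm{P}(\mathcal{B}_{\bm{P}})=\int\bm{P}_{\bm{\theta}}(\mathcal{B}_{\bm{P}})\,d\bm{\pi}(\bm{\theta}),
\end{align*}
and since $0\le\bm{P}_{\bm{\theta}}(\mathcal{B}_{\bm{P}})\le1$ for every $\bm{\theta}$ while $\bm{\pi}$ is a~probability measure, the integrand must be $1$ for $\bm{\pi}$-almost all $\bm{\theta}$, which is the asserted equality. I expect the only delicate point to be the measurability bookkeeping of the two middle paragraphs (measurability of $\mathcal{B}_{\bm{P}}$ and of $\bm{\theta}\mapsto\bm{P}_{\bm{\theta}}(\mathcal{B}_{\bm{P}})$, plus the extension of the mixture identity off the cylinders); the heart of the proof is the trivial observation that a~$[0,1]$-valued function whose $\bm{\pi}$-integral equals $1$ must equal $1$ $\bm{\pi}$-almost everywhere.
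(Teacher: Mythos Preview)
Your proposal is correct and follows essentially the same route as the paper: both use Theorem~\ref{theoKPBU} to obtain $\bm{P}(\mathcal{B}_{\bm{P}})=1$, disintegrate this as $\int\bm{P}_{\bm{\theta}}(\mathcal{B}_{\bm{P}})\,d\bm{\pi}(\bm{\theta})=1$, and conclude that the $[0,1]$-valued integrand equals $1$ $\bm{\pi}$-almost everywhere. The only differences are cosmetic: you argue the last step directly (a nonnegative function with zero integral vanishes a.e.), while the paper passes through the level sets $\mathcal{G}_n=\{\bm{\theta}:\bm{P}_{\bm{\theta}}(\mathcal{B}_{\bm{P}})\ge1-1/n\}$; and you spell out the measurability bookkeeping that the paper leaves implicit.
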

\begin{proof}
  Let $\mathcal{G}_n:=
  \klam{\bm{\theta}\in\bm{\Theta}:\bm{P}_{\bm{\theta}}(\mathcal{B}_{\bm{P}})
    \ge 1-1/n}$.  By Theorem \ref{theoKPBU},
  $1=\bm{P}(\mathcal{B}_{\bm{P}})\le
  \bm{\pi}(\mathcal{G}_n)+\bm{\pi}(\bm{\Theta}\setminus\mathcal{G}_n)(1-1/n)=
  1-n^{-1}\bm{\pi}(\bm{\Theta}\setminus\mathcal{G}_n)$.  Thus
  $\bm{\pi}(\mathcal{G}_n)=1$. Finally, we appeal to
  $\sigma$-additivity of $\bm{\pi}$.  For $\mathcal{G}:=
  \klam{\bm{\theta}\in\bm{\Theta}:\bm{P}_{\bm{\theta}}(\mathcal{B}_{\bm{P}})
    = 1}=\bigcap_n \mathcal{G}_n$ we obtain
  $\bm{\pi}(\mathcal{G})=\inf_n \bm{\pi}(\mathcal{G}_n)=1$.
\end{proof}

Corollaries \ref{theoKPBUone} and \ref{theoKPBUtwo} demonstrate that
the \emph{ultimate} redundancy of a~Bayesian code is nearly optimal
when compared with any computable code on data typical of
noncomputable parameterized measures $\bm{P}_{\bm{\theta}}$.  This
statement holds for any imaginable statistical model
$\klam{\bm{P}_{\bm{\theta}}:\bm{\theta}\in \bm{\Theta}}$.
Computability of the Bayesian code is the only restriction and the
only caveat is that $\bm{P}_{\bm{\theta}}(\mathcal{B}_{\bm{P}})=1$
holds for prior-almost all parameters $\bm{\theta}\in \bm{\Theta}$
rather than for all of them.

\begin{example}[a code for ``almost all'' distributions]
  \label{exBayesianCode}
  This example stems from the observation that we can encode any
  probability measure on $\mathbb{X}^\infty$ with a~single infinite
  sequence $\bm{\theta}={\theta}_1{\theta}_2{\theta}_3...$ over
  the alphabet $\mathbb{Y}=\klam{0,1,...,D-1}\ni
  \theta_n$.\footnote{One can also put $\bm{\theta}=\sum_{k=1}^\infty
    \theta_k D^{-k}$ since the set of real numbers having two
    different $D$-ary expansions is negligible.}

  For simplicity let the input alphabet be the set of natural numbers,
  $\mathbb{X}:=\mathbb{N}$. The link between $\bm{\theta}$ and
  a~measure $\bm{P}_{\bm{\theta}}$ will be established by imposing
  equality $\bm{P}_{\bm{\theta}}(x^n)=$
  \begin{align}
    \okra{
      \bm{P}_{\bm{\theta}}(x^{n-1})
      -\sum_{y\in\mathbb{N}:y<x_n} \bm{P}_{\bm{\theta}}(x^{n-1}y)
    }
    \cdot
    \sum_{k=1}^\infty \theta_{\phi(x^n,k)} D^{-k}
    \label{GenParam}
    ,
  \end{align}
  where $\bm{P}(\lambda)=1$ for the empty word and a~bijection $\phi:
  \mathbb{N}^+\times \mathbb{N}\rightarrow\mathbb{N}$ is used. It is
  easy to see that $\bm{P}_{\bm{\theta}}$ is a~probability measure on
  $\mathbb{X}^\infty$ for each $\bm{\theta}$. Conversely, each
  probability measure on $\mathbb{X}^\infty$ equals
  $\bm{P}_{\bm{\theta}}$ for at least one $\bm{\theta}$.

  Let the prior be the uniform iid measure
  $\bm{\pi}(\theta^m):=D^{-m}$ for
  ${\theta}^m:={\theta}_1{\theta}_2...{\theta}_m$. The Bayesian
  measure $\bm{P}(x)=\int \bm{P}_{\bm{\theta}}(x)
  d\bm{\pi}(\bm{\theta})$ is computable.  Consequently, the Bayesian
  code with respect to $(\klam{\bm{P}_{\bm{\theta}}},\bm{\pi},{c})$ is
  computable and $(\mathcal{B}_{\bm{P}},|{c}(n)|+1)$-superuniversal.
\end{example}

Whereas parameterization (\ref{GenParam}) is general, the measure
$\bm{P}$ introduced in this example equals simply
$\log_2 \bm{P}(x^n)=-\textstyle\sum_{i=1}^n x_i
$.
Although $\bm{P}_{\bm{\theta}}(\mathcal{B}_{\bm{P}})=1$ for
$\bm{\pi}$-almost all $\bm{\theta}$, the Bayesian code with respect to
this $\bm{P}$ is suboptimal for stationary measures different to
$\bm{P}$.

\section{Conclusion}
\label{secConclusion}

We hope that our simple insights may be used in future research to
better characterize several paradoxical phenomena that have haunted
the emerging MDL statistics. These phenomena  are:
nonexistence of universal redundancy rates, superefficient
compression/estimation, converging and diverging Bayesian predictors,
and various ``catch-up'' phenomena.

It is important to understand for which particular parameters the
claim of Corollary \ref{theoKPBUtwo} holds or fails.  Inspired by
\cite{Vovk09} and \cite{VitanyiLi00}, we have started contemplating
the following problem:
\begin{question} 
  \label{quesRandomParGen}
  Consider a~computable measure $\bm{P}(x)=\int
  \bm{P}_{\bm{\theta}}(x) d\bm{\pi}(\bm{\theta})$, where parameter
  values $\bm{\theta}$ are infinite sequences as well.  Let
  $\mathcal{X}$ be the set of sequences for which the Bayesian code
  with respect to $(\klam{\bm{P}_{\bm{\theta}}},\bm{\pi},{c})$ is
  $(\mathcal{X},|{c}(n)|+\mathcal{E}(n))$-universal.  What does
  $\bm{P}_{\bm{\theta}}(\mathcal{X})$ equal for $\bm{\theta}$ that (i)
  are algorithmically random, or (ii) exhibit a~deficiency of
  algorithmic randomness (e.g. they are computable)?
\end{question}
We have already proved that $\bm{P}_{\bm{\theta}}(\mathcal{X})=1$ for
(i) whereas $\bm{P}_{\bm{\theta}}(\mathcal{X})=0$ for (ii)
under some natural conditions, e.g., for exponential iid distributions
$\bm{P}_{\bm{\theta}}$.

The second group of interesting open problems concerns catch-up times.
Can we know the catch-up times approximately?  How can we use this
knowledge to verify or to falsify a~statistical model for concrete
data of limited length?

\section{Acknowledgement}

We express our thanks to Peter Gr\"unwald for interesting and
enjoyable discussions which stimulated us to write this paper. The
research reported in this work was supported in part by the IST
Program of the European Community, under the PASCAL II Network of
Excellence, IST-2002-506778, and done on the author's leave from the
Institute of Computer Science, Polish Academy of Sciences.

\bibliography{0-journals-abbrv,0-publishers-abbrv,ai,ql,mine,math,tcs,books,nlp}

\end{document}